\newtheorem{theorem}{Theorem}[section]
\theoremstyle{definition}
\theoremstyle{definition}
\newcommand{\RNum}[1]{\lowercase\expandafter{\romannumeral #1\relax}}
\newcommand{\RNumU}[1]{\uppercase\expandafter{\romannumeral #1\relax}}
\def\BibTeX{{\rm B\kern-.05em{\sc i\kern-.025em b}\kern-.08em
    T\kern-.1667em\lower.7ex\hbox{E}\kern-.125emX}}
\begin{document}

\title{CI-RKM: A Class-Informed Approach to Robust Restricted Kernel Machines
}

\author{
\IEEEauthorblockN{Ritik Mishra}
\IEEEauthorblockA{
\textit{Department of Mathematics} \\
\textit{Indian Institute of Technology Indore}\\
phd2301241003@iiti.ac.in}
\and
\IEEEauthorblockN{Mushir Akhtar}
\IEEEauthorblockA{
\textit{Department of Mathematics} \\
\textit{Indian Institute of Technology Indore}\\
phd2101241004@iiti.ac.in}
\and
\IEEEauthorblockN{M. Tanveer\textsuperscript{*}\thanks{\textsuperscript{*}Corresponding author}}
\IEEEauthorblockA{
\textit{Department of Mathematics} \\
\textit{Indian Institute of Technology Indore}\\
mtanveer@iiti.ac.in}
}

\maketitle
\begin{abstract}
Restricted kernel machines (RKMs) represent a versatile and powerful framework within the kernel machine family, leveraging conjugate feature duality to address a wide range of machine learning tasks, including classification, regression, and feature learning. However, their performance can degrade significantly in the presence of noise and outliers, which compromises robustness and predictive accuracy. In this paper, we propose a novel enhancement to the RKM framework by integrating a class-informed weighted function. This weighting mechanism dynamically adjusts the contribution of individual training points based on their proximity to class centers and class-specific characteristics, thereby mitigating the adverse effects of noisy and outlier data. By incorporating weighted conjugate feature duality and leveraging the Schur complement theorem, we introduce the class-informed restricted kernel machine (CI-RKM), a robust extension of the RKM designed to improve generalization and resilience to data imperfections. Experimental evaluations on benchmark datasets demonstrate that the proposed CI-RKM consistently outperforms existing baselines, achieving superior classification accuracy and enhanced robustness against noise and outliers. Our proposed method establishes a significant advancement in the development of kernel-based learning models, addressing a core challenge in the field.
\end{abstract}

\begin{IEEEkeywords}
Restricted kernel machines (RKM), Class-informed weights, Kernel Methods, Weighted conjugate feature duality.
\end{IEEEkeywords}

\section{Introduction}
\IEEEPARstart{K}{ernel} methods have long been at the forefront of machine learning, offering robust tools for addressing complex datasets characterized by non-linear relationships \cite{wang2022support,wilson2016deep}. These methods operate by transforming data from its original input space into a higher-dimensional reproducing kernel Hilbert space (RKHS) \cite{mercer1909xvi} using a feature map \( \phi(\cdot): X \to H \). This transformation enables the resolution of intricate non-linear problems in the input space through linear operations within the RKHS. When the optimization problem is formulated in terms of inner products between data points, kernel functions \( k(x, y) = \langle \phi(x), \phi(y) \rangle_H \) can be utilized, a concept known as the kernel trick \cite{scholkopf2000kernel}. The kernel trick is particularly advantageous in scenarios involving high-dimensional or even infinite-dimensional feature mappings, as it circumvents the explicit computation of the feature map while retaining computational efficiency \cite{hofmann2006review}. This makes kernel methods highly effective for a range of machine learning tasks, including classification \cite{elisseeff2001kernel}, \cite{kumari2024diagnosis}, regression \cite{soman2009machine}, feature extraction \cite{kocsor2004kernel}, and dimensionality reduction \cite{wu2019solving}. Prominent methods that utilize the kernel trick to effectively deal with complex, non-linear data structures include support vector machines (SVMs) \cite{vapnik2013nature}, least squares support vector machines (LS-SVMs) \cite{suykens1999least}, and kernel principal component analysis (kernel PCA) \cite{scholkopf1997kernel}.


Among the innovative models within the kernel machine family, the restricted kernel machine (RKM) \cite{10.1162/neco_a_00984} has emerged as a promising approach to kernel-based learning. The RKM model, as proposed by \citet{10.1162/neco_a_00984} \cite{10.1162/neco_a_00984}, extends the LS-SVM framework by incorporating principles from restricted Boltzmann machines (RBMs) \cite{hinton2005kind}. This integration leverages the concept of conjugate feature duality, wherein the primal and dual variables correspond to visible and hidden layers, respectively. This formulation enables the RKM to transform data into a high-dimensional feature space, facilitating the construction of linear separating hyperplanes, akin to traditional kernel methods \cite{houthuys2021tensor}.

One of the RKM's most compelling features is its capacity to form deeper architectures by stacking multiple RKMs, allowing for hierarchical and complex feature learning. This capability significantly enhances its flexibility and performance, particularly when dealing with high-dimensional data \cite{10.1162/neco_a_00984}. Over the years, RKMs have been further developed and refined, demonstrating their effectiveness across a wide range of tasks, including supervised and unsupervised learning \cite{pandey2021generative}, \cite{HOUTHUYS202154}.  Additionally, the RKM framework bridges the gap between traditional kernel methods and deep learning paradigms, offering a versatile and robust alternative to conventional kernel-based models. This combination of theoretical rigor and practical adaptability positions the RKM as a valuable tool in the broader landscape of machine learning.

The RKM framework has undergone various adaptations and extensions to tackle a diverse range of machine learning challenges. For example, RKMs have been successfully employed in multi-view classification tasks, where the model efficiently integrates and processes multiple data modalities through tensor-based representations \cite{houthuys2021tensor}. Furthermore, RKMs have demonstrated remarkable effectiveness in learning disentangled representations, which are particularly valuable for unsupervised learning tasks such as outlier detection and anomaly identification \cite{tonin2021unsupervised}. These disentangled features offer a clearer understanding of the underlying structure of the data, enabling the model to distinguish between meaningful variations and noise. In addition to their application in classification and representation learning, RKMs have been utilized in generative modeling tasks. By learning a latent space that encapsulates the data distribution, RKMs can generate new data points through sampling, making them a powerful tool for data generation and predictive modeling \cite{pandey2021generative}, \cite{schreurs2018generative}. This versatility underscores the RKM’s potential to address a broad spectrum of machine learning problems. However, despite these strengths, RKMs face a significant limitation. Their reliance on conjugate features for classification does not account for the geometric relationships between data points, rendering the model susceptible to the adverse effects of noise and outliers. While RKMs perform exceptionally well on clean and well-structured datasets, their robustness diminishes in the presence of noisy or incomplete information. Addressing this limitation is essential for further enhancing the model's applicability and reliability in real-world scenarios.

Robust loss functions and weighting schemes are fundamental components in enhancing the robustness of machine learning models, particularly in handling noise and outliers. Recently various robust loss functions, such as wave loss \cite{akhtar2024advancing}, RoBoSS loss \cite{10685140}, guardian loss \cite{akhtar2024gl}, and others \cite{akhtar2024hawkeye}, have been introduced to improve resilience against anomalous data. In parallel, weighting schemes have also been extensively explored to mitigate the adverse effects of noise and outliers  \cite{lin2002fuzzy}, \cite{kumari2024class}, \cite{TANVEER2024127712}. These weighting functions assign adaptive weights to data points based on
their distance from the class center, effectively reducing the
influence of outliers and noisy samples on the learning process
\cite{10849606}, \cite{akhtar2024flexi}. By prioritizing more representative samples while
diminishing the impact of anomalous data, weighted functions
provide an elegant mechanism to improve the generalization
performance of machine learning models in diverse scenarios.

Building on the remarkable success of weighting strategies in handling noise and outliers, in this work, we introduce a novel extension to the RKM framework that integrates geometric and class-specific information into the learning process. Our approach seamlessly incorporates a class-informed weighted function into RKM architecture, which dynamically adjusts the influence of each data point based on its class membership and proximity to the class center. By explicitly accounting for the geometric relationships between data points, this weighting mechanism enhances the model’s robustness against noise and outliers, leading to more reliable and accurate predictions. The proposed method, termed the class-informed restricted kernel machine (CI-RKM), seamlessly integrates the class-informed weight function into the RKM framework. This enhancement equips the model with the ability to better generalize to real-world datasets, even when faced with imperfections such as noisy or incomplete data. Extensive experimental evaluations validate the effectiveness of CI-RKM, demonstrating its superior classification accuracy and robustness compared to baseline models. The proposed approach represents a significant advancement in the kernel machine family, offering a versatile and resilient solution for a wide range of machine learning applications.

The key contributions of this work are summarized as follows:
\begin{itemize}
    \item We propose the class-informed restricted kernel machine (CI-RKM), a robust extension of RKM, which incorporates a class-informed weighted function to improve resilience against noise and outliers in classification tasks.
    \item We leverage weighted conjugate feature duality and the Schur complement theorem in the development of CI-RKM, enabling enhanced resilience to data imperfections and outperforming baseline models in terms of both robustness and generalization.
    \item We validate the robustness and generalization capabilities of CI-RKM through extensive experiments on benchmark datasets, demonstrating its superior performance in terms of classification accuracy and resilience compared to traditional RKMs and other baseline models.
\end{itemize}
The remainder of the paper is organized as follows: Section \ref{Related-Work} provides an overview of the RKM framework. Section \ref{Proposed-work} introduces the proposed class-informed restricted kernel machine (CI-RKM), detailing the class-informed weighted function and its integration into the RKM framework. Section \ref{Experiment-section} presents the experimental setup, results, and a comprehensive evaluation of CI-RKM's performance against baseline models. Finally, Section \ref{Conclusions-section} concludes the paper with potential future research directions.

\section{Related Work}\label{Related-Work}
In this section, we discuss the mathematical formulation of the restricted kernel machine for binary classification problems.
\subsection{Restricted Kernel Machine (RKM)}

The RKM, proposed by Suykens et al. \cite{10.1162/neco_a_00984}, builds upon the Least Squares Support Vector Machine framework, taking inspiration from RBM \cite{hinton2005kind}. It introduces an approach by linking primal and dual variables through visible and hidden layers, analogous to the RBM architecture. This dual formulation is achieved using conjugate feature duality, allowing the RKM to model complex relationships effectively.


\par
Similar to LS-SVM, the RKM method utilizes the kernel trick to map the data into a higher-dimensional feature space, where a linear decision boundary can be constructed, see fig. \ref{fig:side_by_side}. The optimization problem for classification tasks is formulated as follows.

Consider a training set consisting of \( N \) data points \( \{ (\mathbf{x}_k, y_k) \}_{k=1}^N \), where \( \mathbf{x}_k \in \mathbb{R}^d \) represents the \( k \)-th input vector and \( y_k \in \{-1, 1\} \) denotes the corresponding class label. The objective function for the RKM classification model is defined as:
\begin{equation}
\mathcal{J} = \frac{\eta}{2} w^T w + \sum_{k=1}^N \left( 1 - \left(  w^T \sigma(x_k) + b \right) y_k \right) h_k - \frac{\lambda}{2} \sum_{k=1}^N h_k^2.
\label{eqrkm}
\end{equation}
Here, \( \mathbf{w} \) represents the weight vector, \( b \) is the bias term, and \( \lambda \) and \( \eta \) are regularization coefficients. The hidden feature values are denoted by \( h_k \in \mathbb{R} \). The feature map \( \sigma: \mathbb{R}^d \to \mathbb{R}^{d_h} \) is responsible for mapping the input vectors to a higher-dimensional space, where \( d \) is the dimensionality of the input space, and \( d_h \) is the dimension of the feature space into which the data are projected. Typically, this feature map is implicitly defined through a positive-definite kernel \( K(\mathbf{x}_i, \mathbf{x}_j) \), which specifies the inner product between the mapped feature vectors:
\begin{equation}
K(\mathbf{x}_i, \mathbf{x}_j) = \sigma(\mathbf{x}_i)^T \sigma(\mathbf{x}_j).
\end{equation}
By using this kernel trick, RKM avoids explicitly defining the high-dimensional feature space, allowing it to operate in spaces of high or even infinite dimensionality.

By solving the optimization problem and finding the stationary points of \(\mathcal{J}\), the RKM reduces to a linear system that can be efficiently solved. Additionally, RKM can be extended to multiclass classification by using encoding strategies such as one-vs-all (OVA) or minimum output encoding (MOC), where separate binary classifiers are trained for each class.

In the binary classification setting, as derived from equation (\ref{eqrkm}), the RKM can be formulated as a system of linear equations:
\begin{equation}
\left[\begin{array}{c|c}
\frac{1}{\eta} M + \lambda I_N & 1_N \\
\hline
1_N^T & 0
\end{array}\right]
\left[\begin{array}{c}
y \odot h \\
\hline
b
\end{array}\right]
=
\left[\begin{array}{c}
y \\
\hline
0
\end{array}\right].
\end{equation}
Here, \({M}\) represents the kernel matrix for all training data points, \({y}\) are the class labels, and \(\odot\) denotes the element-wise product. The kernel matrix \({M}\) is computed as:
\begin{equation}
M_{ij} = \sigma(\mathbf{x}_i)^T \sigma(\mathbf{x}_j) = K(\mathbf{x}_i, \mathbf{x}_j), \quad i,j = 1, \dots, N.
\end{equation}

This approach effectively handles binary classification problems by training a classifier and leveraging the kernel function to compute the necessary inner products in the transformed feature space.


\begin{figure*}
\centering
\subcaptionbox{} { %
\includegraphics[width=0.40\textwidth,keepaspectratio]{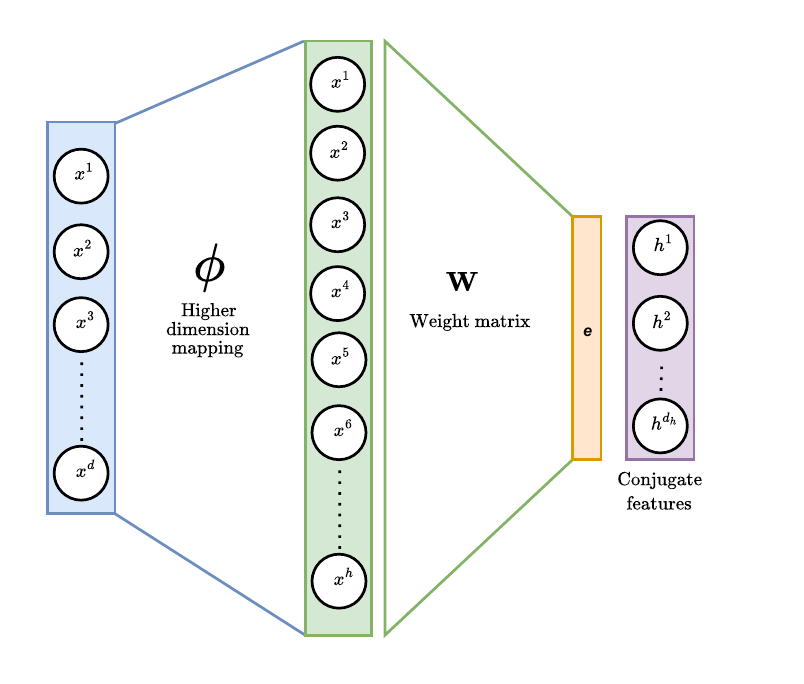}}
      \hspace{2mm}
      \subcaptionbox{} { %
\includegraphics[width=0.40\textwidth,keepaspectratio]{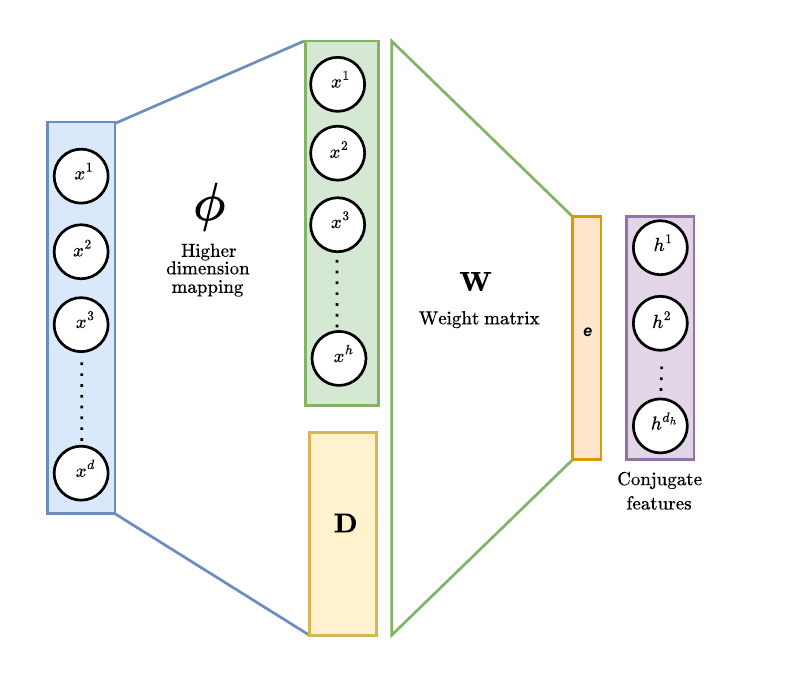}}
      \caption{Subfigure (a) illustrates the traditional RKM, which uses a standard kernel function for mapping input data into a higher-dimensional space. In contrast, subfigure (b) presents the proposed CI-RKM, where the conjugate features are modified by incorporating class-specific information, enhancing the model's ability to capture class distinctions in the data.}
    \label{fig:side_by_side}
 \end{figure*}

\section{Proposed Work}\label{Proposed-work}

In this section, we first discuss the class-informed weight function and then proceed to present the mathematical formulation of the proposed class-informed restricted kernel machine (CI-RKM) for classification problems.

\subsection{Class-Informed Weighted Function}

Inspired by the weighted schemes used in various machine learning models \cite{ha2013support}, we aim to incorporate a weight that captures not only the geometric characteristics of the data but also provides robustness to noise and outliers. The class-informed weight function  \( D \) assesses the proximity of each sample to the centroid of its corresponding class within a high-dimensional feature space. For a given training sample \( x_k \), the weight function is defined as:

\begin{equation}
D(x_k) = 
\begin{cases} 
1 - \frac{\| \sigma(x_k) - K^+ \|}{r^+ + \xi} , & y_k = +1, \\
1 - \frac{\| \sigma(x_k) - K^- \|}{r^- + \xi} , & y_k = -1,
\end{cases}
\label{eqweight}
\end{equation}

where $\xi>0$ is small positive number, \( K^+ \) and \( K^- \) denote the centroids of the positive and negative classes, respectively, \( r^+ \) and \( r^- \) represent the radii of these classes. The distance \( r \) between two samples is defined as:

\begin{equation}
r(\sigma(x_i), \sigma(x_j)) = \| \sigma(x_i) - \sigma(x_j) \|,
\end{equation}
where \( \| \cdot \| \) denotes the Frobenius norm. The centroid for each class is calculated as:

\begin{equation}
\mathcal{K^+} = \frac{1}{N^+} \sum_{y_k = 1} \sigma(x_k) \quad \text{and} \quad \mathcal{K^-} = \frac{1}{N^-} \sum_{y_k = -1} \sigma(x_k),
\end{equation}

where \( N^+ \) and \( N^- \) represent the number of samples in the positive and negative classes, respectively. The radius of each class is defined as:

\begin{equation}
r^\pm = \max_{y_i = \pm 1} \| \sigma(x_i) - \sigma(x_j) \|.
\end{equation}

\begin{theorem}
    \cite{ha2013support}  Let \( K(x, x') \) be a kernel function. Then the inner product distance is given by:

    \begin{equation}
\left\|\sigma(x) - \sigma(x')\right\| = \sqrt{K(x, x) + K(x, x') - 2K(x, x')}.
\end{equation}
\end{theorem}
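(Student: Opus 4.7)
The plan is to reduce the identity to a direct algebraic expansion of the squared norm in the feature space, using only the definition of the Frobenius/Hilbert norm and the kernel trick $K(u,v)=\langle \sigma(u),\sigma(v)\rangle_H$ already introduced in the excerpt. Since the norm squared equals the inner product of a vector with itself, I would square both sides first and show that
\[
\|\sigma(x)-\sigma(x')\|^2 = K(x,x) + K(x',x') - 2K(x,x'),
\]
from which the claim follows by taking the positive square root (well-defined because an inner product of a vector with itself is nonnegative). Note that, as stated in the excerpt, the second term on the right should read $K(x',x')$ rather than $K(x,x')$; I would point this out as an evident typo and proceed with the correct version.

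The main steps, in order, would be: (i) write $\|\sigma(x)-\sigma(x')\|^2 = \langle \sigma(x)-\sigma(x'),\, \sigma(x)-\sigma(x')\rangle_H$ using the definition of the norm induced by the RKHS inner product; (ii) expand bilinearly, obtaining
\[
\langle \sigma(x),\sigma(x)\rangle_H - 2\langle \sigma(x),\sigma(x')\rangle_H + \langle \sigma(x'),\sigma(x')\rangle_H,
\]
where symmetry of the inner product collapses the two cross terms; (iii) substitute $K(u,v)=\langle \sigma(u),\sigma(v)\rangle_H$ in each term; and (iv) take square roots, noting that $\|\sigma(x)-\sigma(x')\|\ge 0$ so the principal root is the correct choice.

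There is essentially no difficulty here, since the identity is a direct consequence of the polarization-type expansion of a norm squared combined with the kernel trick. The only subtlety worth flagging is that the argument implicitly uses the fact that $K$ is a valid (positive semidefinite, symmetric) kernel, so that $K(x,x)+K(x',x')-2K(x,x')\ge 0$ and the square root is real; this is automatic from the existence of the feature map $\sigma$ into a Hilbert space, which is exactly the setting assumed throughout Section~\ref{Related-Work}. Thus no new machinery beyond the definitions already given in the paper is required.
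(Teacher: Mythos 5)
Your proposal is correct and follows essentially the same route as the paper's own proof: expand $\left\|\sigma(x)-\sigma(x')\right\|^2$ as an inner product, use bilinearity and symmetry, substitute the kernel, and take the nonnegative square root. You are also right that the stated identity contains a typo (the second term should be $K(x',x')$, not $K(x,x')$); the paper's proof expands the middle step correctly as $\sigma(x')\cdot\sigma(x')$ but then repeats the same typo in its final substitution.
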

\begin{proof}
    
\begin{equation*}
\begin{aligned}
&\left\|\sigma(x) - \sigma(x')\right\| = \sqrt{\left(\sigma(x) - \sigma(x')\right) \cdot \left(\sigma(x) - \sigma(x')\right)} \\
&= \sqrt{(\sigma(x) \cdot \sigma(x)) + (\sigma(x') \cdot \sigma(x')) - 2(\sigma(x) \cdot \sigma(x'))} \\
&= \sqrt{K(x, x) + K(x, x') - 2K(x, x')}.
\end{aligned}
\end{equation*}
\end{proof}
\subsection{Class Informed Restricted Kernel Machine (CI-RKM)}
We generalize the concept of conjugate feature duality by introducing a weighting matrix \( D \). Let \( D \succ 0 \) represent a positive-definite diagonal matrix. For two vectors \( e, h \in \mathbb{R}^n \) and a scalar \( \lambda > 0 \), the following inequality holds:
\begin{equation}
\frac{1}{2 \lambda} e^{\top} D e + \frac{\lambda}{2} h^{\top} D^{-1} h \geq e^{\top} h.
\label{eqweightcong}
\end{equation}

This inequality can be validated using the Schur complement, which reformulates the inequality in quadratic form:
\begin{equation}
\frac{1}{2} \left[ \begin{array}{ll}
e^{\top} & h^{\top}
\end{array} \right] 
\left[ \begin{array}{cc}
\frac{1}{\lambda} D I & -I \\
-I & \lambda D^{-1} I
\end{array} \right] 
\left[ \begin{array}{l}
e \\
h
\end{array} \right] \geq 0.
\end{equation}
From this, we deduce that for a matrix \( Q = \left[ \begin{array}{cc} A & B \\ B^{\top} & C \end{array} \right] \), the condition \( Q \succeq 0 \) holds if and only if \( A \succ 0 \) and the Schur complement \( C - B^{\top} A^{-1} B \succeq 0 \). This formulation leads to the Fenchel-Young inequality for quadratic functions.

 The constraints for the LS-SVM classifier are given by:

\begin{equation}
y_k \left( w^T \sigma(x_k) + b \right) = 1_p - e_k, \quad k = 1, \dots, N,
\end{equation}

where \( y_k \in \{-1, 1\}^p \), \( e_k \in \mathbb{R}^p \) represents the class label encoding, and \( Y_k = \operatorname{diag} \left\{ y_{(1), k}, \dots, y_{(p), k} \right\} \) is a diagonal matrix.

Starting from the objective function of the LS-SVM with weight \( D \) incorporated into the restricted kernel machine see fig. \ref{fig:side_by_side}, we arrive at the following formulation:

\begin{equation}
\begin{aligned}
\mathcal{J} = & \frac{\eta}{2} \operatorname{Tr}(w^T w) + \frac{1}{2 \lambda} \sum_{k=1}^N e_k^T D e_k, \\
& \quad \text{subject to} \quad e_k = 1_p - Y_k \left( w^T \sigma(x_k) + b \right), \quad \forall k.
\end{aligned}
\end{equation}
Where D is the class informed weight calculated by equation (\ref{eqweight}).\\
Now applying weight conjugate feature duality by equation (\ref{eqweightcong}), we get:
\begin{equation}
\begin{aligned}
\mathcal{J} \geq & \sum_{k=1}^N e_k^T h_k - \frac{\lambda}{2} \sum_{k=1}^N h_k^T D^{-1} h_k + \frac{\eta}{2} \operatorname{Tr}(w^T w), \\
& \quad \text{subject to} \quad e_k = 1_p - Y_k \left( w^T \sigma(x_k) + b \right), \quad \forall k \\
= & \sum_{k=1}^N \left( 1_p^T - \left( \sigma(x_k)^T w + b^T \right) Y_k \right) h_k \\
&- \frac{\lambda}{2} \sum_{k=1}^N h_k^T D^{-1} h_k + \frac{\eta}{2} \operatorname{Tr}(w^T w) \approx \underline{\mathcal{J}}.
\end{aligned}
\end{equation}

The stationary points of \( \underline{ \mathcal{J}}(h_k, w, b) \) are found by solving the system of equations given below:

\begin{equation}
\left\{
\begin{array}{l}
\frac{\partial \underline{\mathcal{J}}}{\partial h_k} = 0 \Rightarrow 1_p = Y_k \left( w^T \sigma(x_k) + b \right) + \lambda D^{-1} h_k, \quad \forall k \\
\frac{\partial \underline{\mathcal{J}}}{\partial w} = 0 \Rightarrow w = \frac{1}{\eta} \sum_k \sigma(x_k) h_k^T Y_k, \\
\frac{\partial \underline{\mathcal{J}}}{\partial b} = 0 \Rightarrow \sum_k Y_k h_k = 0.
\end{array}
\right.
\end{equation}

The solution in conjugate features follows from the linear system:

\begin{equation}
\left[ \begin{array}{c|c}
\frac{1}{\eta} M + \lambda D^{-1} I_N & 1_N \\
\hline 1_N^T & 0
\end{array} \right]
\left[ \begin{array}{c}
H_y^T \\
\hline
b^T
\end{array} \right]
= \left[ \begin{array}{c}
Y^T \\
\hline
0
\end{array} \right],
\end{equation}
where \( H_y = \left[ y_1 h_1, \dots, y_N h_N \right] \).

The primal and dual model representations are expressed in terms of the feature map and kernel function. The dual representation is given by:

\begin{equation}
(\mathcal{D})_{\text{RKM}}: \hat{y} = \operatorname{sign}\left[ \frac{1}{\eta} \sum_j  h_{y_j} M(x_j, x) + b \right],
\end{equation}
where \( M(x_j, x) \) is the kernel function between the new data point $x_j$ and input data points.

\begin{theorem}
The class informed weight matrix \( diag(D) \) is positive definite, i.e., for all non-zero vector \( v \in \mathbb{R}^n \), the quadratic form \( v^\top D v > 0 \).
\end{theorem}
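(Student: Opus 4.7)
The plan is to exploit the fact that $D$ is diagonal, so positive definiteness reduces to verifying that every diagonal entry $D(x_k)$ defined in equation (\ref{eqweight}) is strictly positive. Once this is established, for any nonzero $v = (v_1, \dots, v_N)^\top$ we simply compute $v^\top D v = \sum_{k=1}^{N} D(x_k)\, v_k^2$, which is a sum of nonnegative terms with at least one term strictly positive (because $v \neq 0$ and every $D(x_k) > 0$), hence $v^\top D v > 0$.

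The substantive step is therefore to show $D(x_k) > 0$ for each training sample. Fix $k$ and consider the case $y_k = +1$ (the negative case is identical by symmetry). By construction, the radius $r^{+}$ is the maximum of $\|\sigma(x_i) - \mathcal{K}^{+}\|$ over all positive-class samples, so in particular $\|\sigma(x_k) - \mathcal{K}^{+}\| \le r^{+}$. Since $\xi > 0$ is a strictly positive slack, this gives the strict bound
\begin{equation*}
\|\sigma(x_k) - \mathcal{K}^{+}\| \;\le\; r^{+} \;<\; r^{+} + \xi,
\end{equation*}
so that $\frac{\|\sigma(x_k) - \mathcal{K}^{+}\|}{r^{+}+\xi} < 1$, and consequently $D(x_k) = 1 - \frac{\|\sigma(x_k) - \mathcal{K}^{+}\|}{r^{+}+\xi} > 0$. (If one additionally wants to invoke Theorem~1, then the norm $\|\sigma(x_k) - \mathcal{K}^{+}\|$ can be expanded in terms of kernel evaluations, but the bound above makes no use of this representation.)

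Combining the two steps, $D$ has only strictly positive diagonal entries, and the sum-of-squares expression for $v^\top D v$ yields the claim. The only delicate point — and the one I would flag as the main obstacle — is the role of the positive slack $\xi$: without it, a sample lying exactly on the boundary of its class ball (i.e., one realizing the maximum distance $r^{\pm}$) would give $D(x_k) = 0$, which would only yield positive semidefiniteness. The assumption $\xi > 0$ is precisely what converts the weak inequality into the strict one required for strict positive definiteness.
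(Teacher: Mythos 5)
Your proposal is correct and follows the same overall skeleton as the paper's proof (a diagonal matrix is positive definite iff its entries are, then $v^\top D v = \sum_k D(x_k)v_k^2$), but you supply the one substantive step that the paper actually omits. The paper justifies $D(x_k) > 0$ only by saying that ``the distance function in equation (\ref{eqweight}) is positive for all $x_k$'' --- which is a non sequitur, since a positive distance could in principle exceed $r^{\pm} + \xi$ and drive $1 - \|\sigma(x_k)-K^{\pm}\|/(r^{\pm}+\xi)$ to zero or below. Your argument gives the correct reason: every in-class sample satisfies $\|\sigma(x_k) - \mathcal{K}^{\pm}\| \le r^{\pm} < r^{\pm} + \xi$, so the ratio is strictly less than $1$, and it is precisely the slack $\xi > 0$ that upgrades semidefiniteness to definiteness --- a point the paper never makes (and indeed its proof ends by concluding only ``positive semidefinite''). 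You also repair a second minor slip: the paper asserts $v_k^2 > 0$ for all $k$, which is false for a general nonzero $v$; your ``nonnegative terms with at least one strictly positive'' is the correct statement. The only caveat is that your bound $\|\sigma(x_k) - \mathcal{K}^{+}\| \le r^{+}$ relies on reading the paper's (ambiguously written) definition of $r^{\pm}$ as the maximum distance from class members to the centroid; under the alternative reading of $r^{\pm}$ as the class diameter the bound still holds, since the centroid is a convex combination of class members, so your argument is robust either way.
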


\begin{proof}
 \( D(x_k) \) represents a class-informed weight associated with the sample \( x_k \), which depends on the distance between \( \sigma(x_k) \) (the feature representation of \( x_k \)) and the class centroids \( K^+ \) or \( K^- \). Since the distance function in equation \ref{eqweight} is positive for all $x_k$, we have \( D(x_k) > 0 \) for all \( k \). Hence, all diagonal entries of \( \text{diag}(D) \) are positive.

Next, consider the quadratic form of \( \text{diag}(D) \) for any vector \( v \in \mathbb{R}^n \):
\[
v^\top \text{diag}(D) v = \sum_{k=1}^n v_k^2 D(x_k).
\]
Since \( D(x_k) > 0 \) and \( v_k^2 > 0 \) for all \( k \), each term \( v_k^2 D(x_k) \) is positive. Therefore, the entire sum is positive.
\[
v^\top \text{diag}(D) v > 0.
\]
Thus, \( \text{diag}(D) \) is positive semidefinite.
\end{proof}

\section{Experiments, Results and Discussion}\label{Experiment-section}



In this section, we first provide an overview of the compared models, the experimental setup, and the hyperparameter settings. Next, we conduct an in-depth analysis of the results, including a detailed discussion of the statistical tests performed. Additionally, we present an ablation study to evaluate the impact of the proposed weighting scheme on the performance of the CI-RKM model.

\subsection{Models Compared}
To assess the performance of the proposed CI-RKM model, we compare it against several established classification models. The following baseline models are included in the comparison:

\begin{itemize}
\item RVFL: Random vector functional link (RVFL) model, as described by \cite{pao1994learning}.

\item RVFLwoDL or ELM: RVFL without direct link or extreme learning machine, as introduced by \cite{huang2006extreme}.

\item IF-RVFL: Intuitionistic fuzzy RVFL model, proposed by \cite{malik2022alzheimer}.

\item NF-RVFL: A Neuro-fuzzy RVFL model, as discussed in \cite{10416391}.

\item RKM: Restricted kernel machine (RKM), as outlined in \cite{10.1162/neco_a_00984}.

\item CI-RKM: The proposed class-informed restricted kernel machine (CI-RKM) model, which is the focus of this study.
\end{itemize}

\subsection{Experimental Setup}
The experiments are conducted on a computing system running Python, featuring an Intel(R) Xeon(R) Platinum 8260 CPU with 24 cores and 48 logical processors, clocked at 2.30 GHz, 256 GB of RAM, and operating on the Windows 10 platform. A 5-fold cross-validation method is utilized, along with grid search for hyperparameter tuning. The testing accuracy for each fold was computed independently for each set of hyperparameters. The average testing accuracy for each set of hyperparameters was then calculated by taking the mean of the accuracies from the five folds. 

The regularization parameters for all the models are chosen from the set \(\{10^{-5}, 10^{-4}, \dots, 10^5\} \). Both the proposed CI-RKM and RKM models employ an RBF kernel, with parameters selected from the range \( \{2^{-5}, 2^{-4}, \dots, 2^5\} \). For RVFL, IF-RVFL, and NF-RVFL models, the number of nodes in the hidden layer is selected from the range \( \{3:20:203\} \) . In NF-RVFL model, the number of fuzzy rules in the fuzzy layer is chosen from the range \(\{5:5:50\}\), with a standard deviation of 1.

\subsection{Performance Evaluation and Statistical Analysis on UCI Datasets}

Table \ref{tab:table1} presents the experimental results for the proposed CI-RKM model on 26 benchmark UCI datasets \cite{dua2017uci}, compared against various baseline models: RKM, RVFLwoDL \cite{huang2006extreme}, RVFL \cite{pao1994learning}, IF-RVFL \cite{malik2022alzheimer}, and NF-RVFL \cite{10416391}. Among these, CI-RKM achieves the highest average accuracy, followed closely by RKM and NF-RVFL, clearly outperforming the remaining baseline models. The proposed CI-RKM achieves average accuracy of 85.94\% and other baseline models as RKM, RVFLwoDL \cite{huang2006extreme}, RVFL \cite{pao1994learning}, IF-RVFL \cite{malik2022alzheimer}, and NF-RVFL \cite{10416391} attains average accuracy scores 84.87\%, 77.57\%, 77.80\%, 77.73\%, and 79.66\%, respectively. 

While the average accuracy provides a general view of model performance, it may not fully capture performance variations across different datasets. To gain a deeper understanding, we employed a series of statistical tests \cite{demvsar2006statistical}, including the ranking test, the Friedman test, and the Nemenyi post hoc test, which offer a more comprehensive and unbiased comparison of the models. In the ranking method, each model is assigned a rank based on its performance on each dataset. The rank of the $m$-th model on the $d$-th dataset is represented as $p (m, d)$, and the average rank across all datasets is computed as:
\[
p(m, \ast) = \frac{1}{D} \sum_{d=1}^D p(m, d).
\]
The average ranks for all models are summarized in the last row of Table \ref{tab:table1}. CI-RKM, with an average rank of 1.81, has the lowest rank, followed by RKM and NF-RVFL, indicating overall best performance of the proposed CI-RKM. To assess whether there are significant performance differences among the models, we applied the Friedman test to the average ranks. The Friedman test follows a chi-squared distribution ($\chi^2_F$) with $M-1$ degrees of freedom, where $M$ is the number of models and $D$ is the number of datasets. The Friedman test statistic is calculated as:
\[
\chi^2_F = \frac{12D}{M(M+1)} \left( \sum_{m=1}^M \left( p(m, \ast) \right)^2 - \frac{M(M+1)^2}{4} \right).
\]
Table \ref{tab:friedmann} shows the results of the Friedman test. We computed the $\chi^2_F$ value as 2.86, with degrees of freedom $F(2,125)$, and the critical value for the F-distribution as 2.15. Since the test statistic exceeds the critical value, we reject the null hypothesis, confirming that the models show statistically significant differences in performance. Further, to examine pairwise differences, the Nemenyi post hoc test is conducted. This test checks whether the average rank of one model is significantly lower than another by at least the critical difference ($C.D.$). The critical difference is computed as:

\[
C.D. = q_{\alpha} \cdot \sqrt{\frac{M(M+1)}{6D}}
\]

where $q_{\alpha}$ is the critical value for the two-tailed Nemenyi test. At a significance level of $\alpha = 0.05$, the critical difference is found to be 1.47. Detailed results of the Nemenyi post hoc test can be seen in Table \ref{tab:posthoc}. These results confirm that CI-RKM outperforms all other models, with RKM also outperforming RVFLwoDL, RVFL, and IF-RVFL.

In summary, the CI-RKM model shows superior performance in terms of both accuracy and stability. Its higher average accuracy, combined with the statistical test results, underscores its significant advantage in terms of generalization compared to the baseline models.

\begin{table*}[]
\centering
\caption{Classification Accuracies of the RKM, RVFLwoDL, RVFL, IF-RVFL, NF-RVFL, and the proposed CI-RKM. The last row of this table represents the average rank of each model.}
\label{tab:table1}
\resizebox{16cm}{!}{%
\begin{tabular}{lcccccc}
\hline
\textbf{Dataset} $\downarrow$ $|$ \textbf{Model} $\rightarrow$  &
  {\textbf{RKM} \cite{10.1162/neco_a_00984}} &
  {\textbf{RVFLwoDL} \cite{huang2006extreme}} &
  {\textbf{RVFL} \cite{pao1994learning}} &
  {\textbf{IF-RVFL} \cite{malik2022alzheimer}} &
  {\textbf{NF-RVFL} \cite{10416391}} &
  {\textbf{CI-RKM}}$^{\dagger}$ \\ \hline
  
bank                              & 92.04 & 89.49 & 89.41 & 89.12 & 89.91 & 91.05          \\
 
blood                             & 78.67 & 76.91 & 76.51 & 77.44 & 77.44 & 78             \\
 
breast\_cancer\_wisc              & 97.14 & 87.99 & 88.57 & 89.85 & 87.71 & 97.14 \\
 
breast\_cancer\_wisc\_prog        & 70.69 & 70.18 & 70.18 & 71.93 & 70.9  & 80    \\
 
chess\_krvkp                      & 99.38 & 71.94 & 72.03 & 72.63 & 85.23 & 99.38 \\
 
congressional\_voting             & 63.22 & 63.22 & 63.68 & 58.85 & 64.14 & 63.22          \\
 
conn\_bench\_sonar\_mines\_rocks  & 92.86 & 60.52 & 62.08 & 54.83 & 63.46 & 95.24 \\
 
cylinder\_bands                   & 81.55 & 65.81 & 66.42 & 63.49 & 70.13 & 80.58          \\
 
echocardiogram                    & 81.48 & 83.9  & 84.67 & 80.77 & 85.44 & 81.48          \\
 
fertility                         & 95    & 91    & 91    & 92    & 91    & 95    \\
 
haberman\_survival                & 79.03 & 73.82 & 73.49 & 75.13 & 75.46 & 77.42          \\
 
heart\_hungarian                  & 77.97 & 74.49 & 73.82 & 76.88 & 78.25 & 72.88          \\
 
mammographic                      & 81.35 & 79.3  & 79.92 & 79.71 & 79.19 & 79.79          \\
 
monks\_2                          & 83.47 & 81.68 & 80.01 & 82.67 & 87.38 & 82.64          \\
 
monks\_3                          & 98.2  & 90.79 & 91.16 & 90.07 & 92.17 & 97.3           \\
 
musk\_1                           & 94.79 & 69.77 & 72.06 & 71.86 & 74.16 & 92.71          \\
 
oocytes\_trisopterus\_nucleus\_2f & 89.07 & 77.52 & 78.94 & 75.22 & 79.93 & 86.89          \\
 
parkinsons                        & 82.05 & 80.51 & 80.51 & 78.46 & 84.1  & 82.05          \\
 
pima                              & 78.57 & 72.27 & 72.01 & 73.83 & 72.66 & 77.27          \\
 
pittsburg\_bridges\_T\_OR\_D      & 80.95 & 87.19 & 87.19 & 89.19 & 90.19 & 85.71          \\
 
planning                          & 75.68 & 71.38 & 71.38 & 69.8  & 71.94 & 75.68          \\
 
spectf                            & 87.04 & 79.72 & 79.34 & 79.34 & 79.72 & 85.19          \\
 
statlog\_heart                    & 90.74 & 80    & 80.37 & 81.85 & 81.85 & 88.89          \\
 
tic\_tac\_toe                     & 100   & 88.93 & 88.83 & 81.43 & 79.32 & 100   \\
 
titanic                           & 80.73 & 77.92 & 77.92 & 79.05 & 79.05 & 79.82          \\
 
vertebral\_column\_2clases        & 90.32 & 70.65 & 71.29 & 85.48 & 80.32 & 93.55 \\ \hline
 
\textbf{Average Accuracy}                  & 84.87 & 77.57 & 77.8  & 77.73 & 79.66 & 85.94 \\ \hline
\textbf{Average Rank} &
  2.15 &
  4.58 &
  4.35 &
  4.31 &
  3 &
  1.81  \\ \hline
  \multicolumn{7}{l}{\(^{\dagger}\) denotes the proposed model.}\\
\end{tabular}%
}
\end{table*}

\begin{table}[h]
\centering
\caption{Results of the Friedman test on the UCI dataset.}
\label{tab:friedmann}
\resizebox{9cm}{!}{%
\begin{tabular}{|l|c|c|c|c|c|c|}
\hline
\textbf{Dataset} & $M$ & $D$ & $\chi_f^2$ & $F_F$ & $F_{(M-1), (M-1)(D-1)}$ & \begin{tabular}[c]{@{}c@{}}\textbf{Significant difference}\\ (\textbf{As per Friedman test})\end{tabular} \\
\hline
\textbf{UCI Dataset} & 6 & 26 & 13.37 & 2.86 & 2.15 & Yes \\ 
\hline
\end{tabular}%
}
\end{table}

\begin{table}[h]
\centering
\caption{Differences in the rankings of the proposed CI-RKM model against baseline models.}
\label{tab:posthoc}
\resizebox{9cm}{!}{%
\begin{tabular}{|l|c|c|c|}
\hline
\textbf{Model} & \textbf{Average Rank} & \textbf{Rank Difference} & \begin{tabular}[c]{@{}c@{}}\textbf{Significant difference}\\ (\textbf{As per Nemenyi post-hoc test})\end{tabular}\\
\hline
\textbf{RVFLwoDL} \cite{huang2006extreme} & 4.58 & 2.77 & Yes \\
\textbf{RVFL} \cite{pao1994learning} & 4.35 & 2.54 & Yes \\
\textbf{IF-RVFL} \cite{malik2022alzheimer} & 4.31 & 2.5 & Yes \\
\textbf{NF-RVFL} \cite{10416391} & 3 & 1.19 & No \\
\textbf{RKM} \cite{10.1162/neco_a_00984} & 2.15 & 0.34 & No \\
\textbf{CI-RKM}$^{\dagger}$ & 1.81 & - & N/A \\ \hline
\multicolumn{4}{l}{\(^{\dagger}\) denotes the proposed model.}\\
\end{tabular}%
}
\end{table}

\begin{table}[]
\centering
\caption{{Ablation study: classification accuracies on 5 selected binary classification datasets after adding label noise to 5\%, 10\%, 20\% of training samples.}}
\label{tab:ablation}
\resizebox{8cm}{!}{%
\begin{tabular}{|l|l|l|l|}
\hline
\textbf{Noise} & \textbf{Datasets}  & {\textbf{RKM} \cite{10.1162/neco_a_00984}} & {\textbf{CI-RKM}}$^{\dagger}$ \\ \hline
5\%   & conn\_bench\_sonar\_mines\_rocks & 88.1                    & 88.1                       \\
      & credit\_approval                 & 80.43                   & 82.61                      \\
      & heart\_hungarian                 & 72.88                   & 74.58                      \\
      & ilpd\_indian\_liver              & 64.1                    & 68.38                      \\
      & ionosphere                       & 91.55                   & 91.55                      \\
      & musk\_1                          & 87.5                    & 88.54                      \\ \hline

& Average Accuracy & 80.76	&82.29   \\   \hline
10\%  & conn\_bench\_sonar\_mines\_rocks & 76.19                   & 71.43                      \\
      & credit\_approval                 & 76.09                   & 76.09                      \\
      & heart\_hungarian                 & 74.58                   & 76.27                      \\
      & ilpd\_indian\_liver              & 64.96                   & 68.38                      \\
      & ionosphere                       & 78.87                   & 87.32                      \\
      & musk\_1                          & 78.13                   & 82.29                      \\  \hline

& Average Accuracy & 74.8 &	76.96  \\   \hline
20\%  & conn\_bench\_sonar\_mines\_rocks & 64.29                   & 61.9                       \\
      & credit\_approval                 & 69.57                   & 69.57                      \\
      & heart\_hungarian                 & 66.1                    & 61.02                      \\
      & ilpd\_indian\_liver              & 55.56                   & 60.68                      \\
      & ionosphere                       & 70.42                   & 76.06                      \\
      & musk\_1                          & 63.54                   & 70.83   \\             \hline

& Average Accuracy & 64.91&	66.68   \\ \hline
\multicolumn{4}{l}{\(^{\dagger}\) denotes the proposed model.}\\
\end{tabular}%
}
\end{table}



\subsection{Ablation Study}

In order to show effectiveness of the proposed CI-RKM and class-informed weight function, we performed an ablation study on $5$ selected benchmark datasets, namely conn\_bench\_sonar\_mines\_rocks, credit\_approval, heart\_hungarian, ilpd\_indian\_liver, ionosphere, and musk\_1, from different domain. 
Table \ref{tab:ablation} presents the performance of RKM and the proposed CI-RKM with label noise added to 5\%, 10\%, and 20\% of the training samples. As the label noise increases, CI-RKM consistently maintains a higher or comparable accuracy compared to RKM, showcasing its resilience to noisy data. Under moderate to high noise conditions, CI-RKM either maintains or improves its performance, indicating that the model is better equipped to generalize and remain stable despite data imperfections. This robustness is particularly important for real-world applications where noisy or incomplete data is common, highlighting the practical utility of the model in less-than-ideal conditions.

The superior performance of CI-RKM under noisy conditions suggests that the architecture and class-informed weight function incorporated into the model play a significant role in enhancing its ability to handle disturbances in the data. This makes CI-RKM a more reliable choice for tasks involving noisy or real-world datasets. While RKM also performs well under lower noise levels, CI-RKM consistently outperforms it as noise levels increase, demonstrating the effectiveness of the proposed model and the class-informed weight function in adapting to challenging conditions. The results in Table \ref{tab:ablation} support the claim that CI-RKM is not only the top performer in terms of accuracy but also the most resilient model when faced with noisy data, making it the most versatile and robust option in this comparison.

\section{Conclusions}\label{Conclusions-section}
This paper introduces the class-informed restricted kernel machine (CI-RKM), a novel extension of the restricted kernel machine designed to address the challenges posed by noisy data and outliers in classification tasks. By incorporating a class-informed weight function that adapts based on proximity to the class center, CI-RKM enhances the model’s robustness and classification accuracy. This innovation leverages geometric relationships between data points, improving the model’s ability to handle noisy datasets. The integration of class-specific information enhances the model’s adaptability, allowing it to capture subtle variations across different classes, leading to reliable and accurate predictions. Through extensive experimental evaluation, we demonstrate that the proposed CI-RKM outperforms the baseline models in terms of both performance and robustness, making it a valuable tool for a wide range of machine learning applications. 

In the future, to enhance the scalability of the RKM model, concepts from granular ball theory can be incorporated \cite{quadir2024granular}. Additionally, to enable the RKM to handle matrix input sample directly, the architecture can be developed by drawing inspiration from the support matrix machine \cite{kumari2024support}.

\bibliographystyle{IEEEtranN}
\bibliography{refs.bib}
\end{document}